\documentclass[twoside]{article}
\usepackage{aistats2016}

\usepackage{hyperref}
\usepackage{url}
\usepackage{natbib} 
\bibliographystyle{abbrvnat}
\setcitestyle{numbers,open={[},close={]}}

\usepackage{amsmath} 
\usepackage{amssymb} 
\usepackage{amsthm}
\usepackage{subcaption,graphicx,caption,subfloat,booktabs,multirow}

\usepackage{pgfplots}\pgfplotsset{compat=1.4} 

\graphicspath{{./}{figures/}}

\usepackage{algorithmic}

 \newtheorem{lemma}{Lemma}[section]

 \newtheorem{algorithm}[lemma]{Algorithm}




\DeclareMathOperator*{\argmin}{argmin}


%
%

\begin{document}

%

%

\twocolumn[

\aistatstitle{Automatic Inference of the Quantile Parameter}

\aistatsauthor{ Karthikeyan Natesan Ramamurthy \And Aleksandr Y. Aravkin \And Jayaraman J. Thiagarajan }

\aistatsaddress{IBM Thomas J. Watson Research Center \And University of Washington \And Lawrence Livermore National Laboratory } ]

\begin{abstract}
Supervised learning is an active research area, with numerous applications in diverse fields such as data analytics, computer vision, speech and audio processing, and image understanding. In most cases, the loss functions used in machine learning assume symmetric noise models, and seek to estimate the unknown function parameters. However, loss functions such as quantile and quantile Huber generalize the symmetric $\ell_1$  and Huber losses to the asymmetric setting, for a fixed quantile parameter. In this paper, we propose to jointly infer the quantile parameter and the unknown function parameters, for the asymmetric quantile Huber and quantile losses. We explore various properties of the quantile Huber loss and implement a {\it convexity certificate} that can be used to check convexity in the quantile parameter. When the loss if convex with respect to the parameter of the function, we prove that it is biconvex in both the function and the quantile parameters, and propose an algorithm to jointly estimate these. Results with synthetic and real data demonstrate that the proposed approach can automatically recover the quantile parameter corresponding to the noise and also provide an improved recovery of function parameters. To illustrate the potential of the framework, we extend the gradient boosting machines with quantile losses to automatically estimate the quantile parameter at each iteration. 
\end{abstract}

\section{INTRODUCTION} 
\label{sec:intro}

In supervised learning, we are interested in estimating the functional relationship between the random \textit{input} variables $\mathbf{x} = \{x_1, \ldots, x_p\}$ and the \textit{response} or \textit{output} variable $y$. The training sample $\{\mathbf{x}_i,y_i\}_{i=1}^n$ is provided and denoting the function that maps $\mathbf{x}$ to $y$ as $f(\mathbf{x})$, we estimate it such that the expected value of the \textit{loss}, $L(y,f(\mathbf{x}))$, is minimized over the joint distribution of all $(y, \mathbf{x})$.

Symmetric losses are used in a range of cases, including regression and robust regression. 
While the quadratic loss is often the method of choice, 
it is not robust with respect to the presence of outliers in the data
\citep{Hub,Gao2008,Aravkin2011tac,Farahmand2011} and may have
difficulties in reconstructing fast system dynamics, e.g. jumps in
the state values \citep{Ohlsson2011}. 
In these cases, the Huber penalty
\citep{Hub}, Vapnik's $\epsilon$-insensitive loss, used in support
vector regression \citep{Vapnik98,Hastie01}. 

These methods are sufficient when the data is homogenous;
however, when the data is heterogeneous, merely estimating the conditional mean
is insufficient, as estimates of the standard errors are often biased.
To comprehensively analyze such heterogeneous datasets, \textit{quantile
regression}~\cite{KB78} has become a popular alternative. 
In quantile regression, one studies the effect of explanatory variables on the
entire conditional distribution of the response variable, rather than just on its mean value. 
Quantile regression is therefore well-suited
to handle heterogeneous datasets~\cite{Buchinsky:1994}.
A sample of recent works in areas such as computational biology \cite{Zou08},
survival analysis~\cite{KG01}, and economics~\cite{KH01} serve as testament to
the increasing popularity of quantile regression.
Furthermore, quantile regression is \emph{robust to
outliers}~\cite{Koenker:2005}: the quantile regression loss is a piecewise
linear ``check function'' that generalizes the absolute deviation loss for
median estimation to other quantiles, and shares the robustness properties of
median estimation in the presence of noise and outliers. 
A huberized extension of the quantile loss has recently been proposed
in the context of high dimensional regression~\cite{aravkin2014qh}. 

The quantile loss and its huberized version are parameterized by a scalar
variable. While state-of-the-art relies on cross-validation methods to obtain 
reasonable values for this parameter, it is intuitively clear that we can infer 
the degree of asymmetry in the loss in addition to the standard parameters of 
interest. Especially in the large data setting, a large residual vector may inform
the degree of asymmetry. 

The paper proceeds as follows. In Section~\ref{sec:QHQ} we introduce the quantile
and quantile Huber loss, and specify their asymmetric parametrization. 
We then design an inference scheme for the asymmetry parameter by exploiting 
the statistical distributions corresponding to the asymmetric losses of interest. In particular, we show 
that the normalization constant, which can be ignored for standard quantile and quantile Huber regression, 
plays a key role both for theoretical and practical aspects of the inference scheme. 
We present a joint optimization method to obtain regression estimates and the 
asymmetry parameter in Section~\ref{sec:Alg}. Finally, we present numerical examples 
that exploit the proposed framework in the context of gradient boosted machines 
in Section~\ref{sec:GBM}.

\section{QUANTILE AND QUANTILE Huber LOSS}
\label{sec:QHQ}
Quantile loss is an important learning tool for the high dimensional inference, 
and a `huberized' version has recently been shown to perform even better in some settings~\cite{aravkin2014qh}.
In particular, while the {\it quantile Huber} shares the asymmetric features of the quantile loss, 
it is smooth at the origin. 

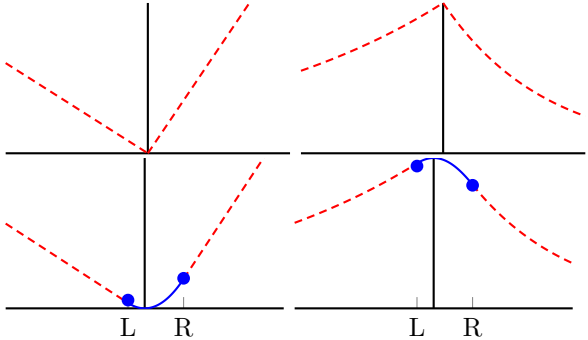
\begin{figure}[t]
\centering
\begin{minipage}{0.49\textwidth} 
\begin{tikzpicture}
  \begin{axis}[
    thick,
    width=.45\linewidth, height=2cm,
    xmin=-2,xmax=2,ymin=0,ymax=1,
    no markers,
    samples=50,
    axis lines*=left, 
    axis lines*=middle, 
    scale only axis,
    xtick={0},
    ytick={0},
    ] 
\addplot[red,domain=-2:0,densely dashed]{-.3*x};
\addplot[red,domain=0:+2,densely dashed]{.7*x};
  \end{axis}
\end{tikzpicture}
\begin{tikzpicture}
  \begin{axis}[
    thick,
    width=.45\linewidth, height=2cm,
    xmin=-2,xmax=2,ymin=0,ymax=1,
    no markers,
    samples=50,
    axis lines*=left, 
    axis lines*=middle, 
    scale only axis,
    xtick={0},
    ytick={0},
    ] 
\addplot[red,domain=-2:0,densely dashed]{exp(-(-.3*x))};
\addplot[red,domain=0:+2,densely dashed]{exp(-(.7*x))};
  \end{axis}
\end{tikzpicture}


\begin{tikzpicture}
\vspace{-.05 in}
  \begin{axis}[
    thick,
    width=.44\linewidth, height=2cm,
    xmin=-2,xmax=2,ymin=0,ymax=1,
    no markers,
    samples=100,
    axis lines*=left, 
    axis lines*=middle, 
    scale only axis,
    xtick={-.24,.56},
    xticklabels={L, R},
    ytick={0},
    ] 
\addplot[red,domain=-2:-2*0.3*0.4,densely dashed]{0.3*abs(x) - 0.4*0.3^2};
\addplot[blue,domain=-2*0.3*0.4:2*(1-0.3)*0.4]{0.25*x^2/0.4};
\addplot[red,domain=2*(1-0.3)*0.4:2,densely dashed]{(1-0.3)*abs(x) - 0.4*(1-0.3)^2};
\addplot[blue,mark=*,only marks] coordinates {(-.24,0.0550) (0.56,0.20)};
  \end{axis}
\end{tikzpicture}
\begin{tikzpicture}
\vspace{-.05 in}
  \begin{axis}[
    thick,
    width=.44\linewidth, height=2cm,
    xmin=-2,xmax=2,ymin=0,ymax=1,
    no markers,
    samples=100,
    axis lines*=left, 
    axis lines*=middle, 
    scale only axis,
    xtick={-.24,.56},
    xticklabels={L, R},
    ytick={0},
    ] 
\addplot[red,domain=-2:-2*0.3*0.4,densely dashed]{exp(-(0.3*abs(x) - 0.4*0.3^2))};
\addplot[blue,domain=-2*0.3*0.4:2*(1-0.3)*0.4]{exp(-(0.25*x^2/0.4))};
\addplot[red,domain=2*(1-0.3)*0.4:2,densely dashed]{exp(-((1-0.3)*abs(x) - 0.4*(1-0.3)^2))};
\addplot[blue,mark=*,only marks] coordinates {(-.24,0.9464851) (0.56, 0.8187308)};
  \end{axis}
\end{tikzpicture}
\end{minipage}

\caption{Quantile $(\tau = 0.3)$  loss (top left) and associated density (top right);  
quantile Huber $(\tau = 0.3)$ loss (bottom left) and associated density (bottom right). 
The quantile Huber loss is obtained by smoothing the quantile loss at the origin.}
\label{fig:PLQFig}
\end{figure}
In this section, we define these two loss functions, and describe their characteristics and properties. 
To enable the inference on the {\it quantile parameter} $\tau$, we develop a statistical
interpretation for quantile and quantile Huber densities, and find normalization constants for these densities.
We also illustrate the key role the normalization constants play in quantile inference. 

For the scalar residual $r = y-f(\mathbf{x})$, the quantile loss (or {\it check function}) is defined as follows:
\begin{equation}
\label{en:quant_loss}
q_{\tau}(r) = (-\tau+1 [{r \geq 0}]) r,
\end{equation} where $1[.]$ is the indicator function. 
Note that the quantile loss is same as the $\ell_1$ loss when $\tau = 0.5$. 
We focus on two features: robustness to outliers (heavy tails), and 
 sparsity of residual (non-smoothness at the origin). 
A sparse residual corresponds to exact fitting of some of the data, 
and this is unlikely to be desirable in most regression settings. 
If we apply Moreau-Yosida smoothing (c.f.~\cite{RTRW}), we obtain use the \textit{quantile Huber}, which matches 
the asymmetric slopes of the quantile outside the interval $[-\kappa\tau,
\kappa(1-\tau)]$, and is quadratic on this interval:
\begin{equation}
\label{quantileHuber}
\small
\rho_\tau(r) = \begin{cases}
\tau |r| - \frac{\kappa \tau^2}{2} & \text{if} \text{ } r < -\tau\kappa,\\
\frac{1}{2\kappa}r^2 & \text{if} \text{ } r\in [-\kappa \tau, (1-\tau)\kappa],\\ 
(1-\tau) |r| - \frac{\kappa(1-\tau)^2}{2}, & \text{if} \text{ } r > \quad (1-\tau)\kappa.
\end{cases}
\end{equation} 
In a data fitting context, the quantile Huber 
is more permissive, allowing small errors for all residuals, in contrast to the quantile loss
which forces a portion of the data to be fit exactly. 
The quantile Huber becomes the Huber loss when $\tau = 0.5$. 
Figure \ref{fig:PLQFig} shows the quantile $(\tau = 0.3)$ and quantile Huber $(\tau = 0.3)$ loss functions.  
As $\kappa \rightarrow 0$, the quantile Huber converges to the quantile loss, so we can view the quantile loss as a 
special case of the quantile Huber. 
The loss for a multi-dimensional residual $\mathbf{r}$ is defined as the sum of losses for individual dimensions for both quantile and quantile Huber losses, i.e., $q_{\tau}(\mathbf{r}) = \sum_{i=1}^N q_{\tau}(r_i)$ and $\rho_\tau (\mathbf{r}) = \sum_{i=1}^N \rho_\tau (r_i)$


\subsection{Quantile Huber Density and Normalized Loss}
In order to perform inference of the quantile parameter $\tau$, we build a correspondence between densities that correspond to 
the penalties of interest. 
For a scalar residual, the we define the density function for the quantile Huber loss as follows: 
\begin{equation}
p (r | \tau) = \frac{1}{c(\tau)} e^{-\rho_{\tau} (r)},
\end{equation} where $c(\tau)$ is the normalization constant: 
\begin{equation}
c(\tau) = \int e^{-\rho_{\tau} (r)} dr.
\end{equation} 
Quantile Huber loss now corresponds to the negative log of the density, ignoring the normalization constant $c(\tau)$. 
If $\tau$ is fixed and known, ignoring $c(\tau)$ does not impact regression estimates. 
However, in our case we are interested in $\tau$, and it becomes essential to consider $c(\tau)$. 
The joint loss function $-\log(p(y | \tau))$ including the normalization constant is given by
\begin{equation}
\label{eq:jointRho}
\bar{\rho}_{\tau} (r) = \rho_{\tau} (r) + \log (c(\tau)).
\end{equation}The normalization constant $c$ can be obtained in 
closed form by simple integration: 
\begin{align}
\nonumber
c(\tau) =& \frac{1}{\tau} e^{-\frac{\kappa \tau^2}{2}} + \frac{1}{1-\tau} e^{-\frac{\kappa (1-\tau)^2}{2}}\\
&+ \sqrt{2 \pi \kappa} \left( F((1-\tau) \sqrt{\kappa}) - F(-\tau \sqrt{\kappa}) \right),
\end{align} where $F$ is the CDF of the standard normal distribution.

\subsubsection{Gradient and Hessian}
\label{sec:GradHess}

Optimization over $\tau$ can be done efficiently once we know the 
gradient and the Hessian of~\eqref{eq:jointRho}. We show the necessary 
derivations in this section, and present an algorithm for joint inference
of $(\tau, x)$ in Section~\ref{sec:Alg}.
The derivative of $\bar{\rho}_{\tau} (r)$ with respect to $\tau$ is
\begin{equation}
\frac{\partial \bar{\rho}_{\tau} (r)}{\partial \tau} =\frac{\partial \rho_{\tau} (r)}{\partial \tau}  + \frac{1}{c(\tau)} \frac{\partial c (\tau)}{\partial \tau}.
\end{equation} The first term can be immediately computed from~\eqref{quantileHuber}:
\begin{equation}
\frac{\partial \rho_{\tau} (r)}{\partial \tau}  = \begin{cases}
-r - \kappa \tau & \text{if} \text{ } r < -\tau\kappa,\\
0 & \text{if} \text{ } r\in [-\kappa \tau, (1-\tau)\kappa],\\ 
-r + \kappa(1-\tau), & \text{if} \text{ } r > \quad (1-\tau)\kappa.
\end{cases}
\end{equation} The derivative $c'(\tau)$ can also be computed in closed form: 
\begin{align}
\label{eqn:cp}
 c'(\tau) = -\frac{1}{\tau^2}e^{-\frac{\kappa \tau^2}{2}} + \frac{1}{(1-\tau)^2}e^{-\frac{\kappa (1-\tau)^2}{2}}.
\end{align} 

\begin{figure}
\centering
\includegraphics[width=7cm]{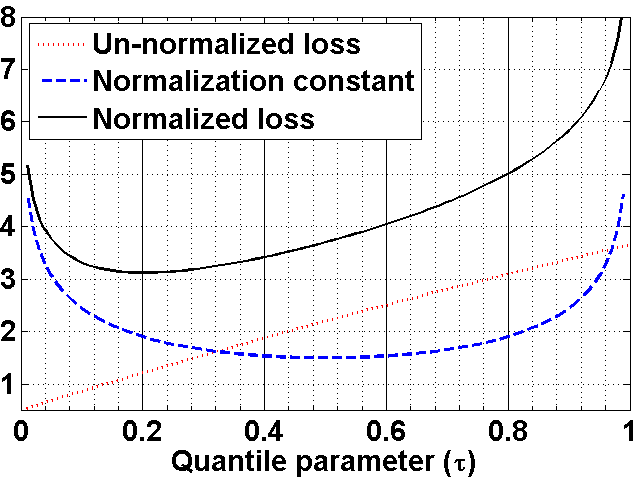}      
\caption{The concave un-normalized loss when added to the convex normalization constant, results in the final loss which is convex in $\tau$ for $\kappa = 1$. $10 \%$ of elements in the residual vector are positive, and the total loss is small for small $\tau$ values.}
\label{fig:lossfn}
\end{figure}

The second derivative of $\log(c(\tau))$ is given by 
\begin{align}
\label{eqn:Hess2}
(\log(c(\tau))''  = \frac{c(\tau) c''(\tau) -(c'(\tau))^2}{c^2(\tau)},
\end{align} where $c'(\tau)$ is given in (\ref{eqn:cp}) and
\begin{align}
\nonumber
c''(\tau) &=  \left(\frac{\kappa}{\tau}+\frac{2}{\tau^3} \right) e^{-\frac{\kappa \tau^2}{2}}\\
\nonumber 
&+\left(\frac{\kappa}{1-\tau}+\frac{2}{(1-\tau)^3} \right) e^{-\frac{\kappa (1-\tau)^2}{2}}.
\end{align} Further, since
\begin{equation}
\label{eqn:Hess1}
\frac{\partial^2 \rho_{\tau} (r)}{\partial \tau^2}  = \begin{cases}
- \kappa \tau & \text{if} \text{ } r \notin [-\kappa \tau, (1-\tau)\kappa],\\
0 & \text{otherwise},
\end{cases}
\end{equation} 
we know $\frac{\partial^2 \bar{\rho}_{\tau} (r)}{\partial \tau^2}$ is just the sum of (\ref{eqn:Hess1}) and (\ref{eqn:Hess2}). The positivity of this Hessian provides us a \textit{certificate of convexity} for $\bar{\rho}_{\tau} (r)$ with respect to $\tau$. This quantity depends on the Huber parameter $\kappa$. 
We empirically verified that minimum value of the second partial of $\bar \rho$ (over $\tau \in [0,1]$) decreases from $8$ to $6$ as $\kappa$ moves from $0$ to $1$, 
so the objective function is strongly convex in $\tau$ for all of these values.  
In contrast, the un-normalized loss, ${\rho}_{\tau} (r)$ given in (\ref{quantileHuber}) is clearly concave in $\tau$, with the minimum value of the second derivative given by $-\kappa$. Adding the normalization term gives rise to a convex problem in $\tau$, paving the way for an inference for both $x$ and $\tau$. 
This idea is demonstrated in Figure \ref{fig:lossfn}, where only $10 \%$ of values in a $1000$-dimensional residual vector were set to be positive. 
By definition, in this case the quantile Huber loss is expected to be small for small values of $\tau$. 
The un-normalized loss is concave, and the minimum is {\it always} $\tau = 0$ when negative elements dominate in the residual, and {\it always} $\tau = 1$ when positive elements dominate. This behavior in principle cannot recover a true $\tau$ strictly in the range $(0,1)$, and can also adversely affect inference in $x$.
When the normalization term is added, the loss becomes convex and the minimum is obtained in the lower range of $\tau$ values. In addition, as shown in Figure \ref{fig:lossfn_diffnoise}, the minimizer $\overline\tau$ for the normalized loss shifts smoothly across $(0,1)$ as the noise changes from mostly negative to mostly positive.

\begin{figure}
\centering
\includegraphics[width=7cm]{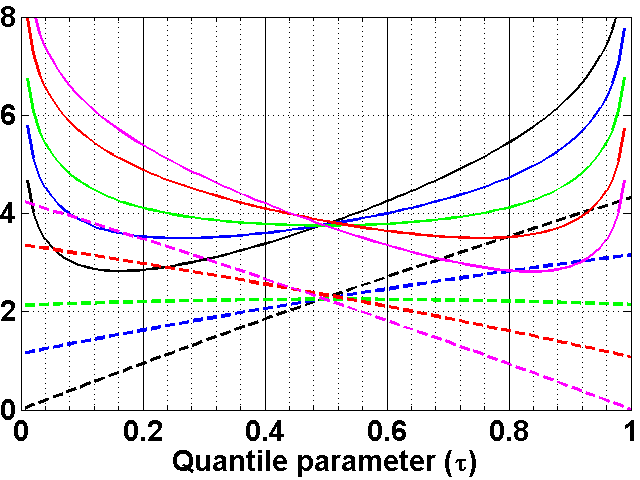}      
\caption{The un-normalized loss (dashed line) and normalized loss (solid line) for different percentages of positive values in the residual - $10 \%$ (black),  $25 \%$ (blue),  $50 \%$ (green),  $75 \%$ (red), and  $100 \%$ (magenta), for $\kappa = 1$. While minimizing the un-normalized loss can provide only a very coarse estimate of $\tau$ - either $0$ or $1$, using the normalized loss can provide a much finer estimate that changes smoothly with the actual $\tau$.}
\label{fig:lossfn_diffnoise}
\end{figure}

In the case of multidimensional residual $\mathbf{r}$, the total loss $\rho_\tau({\mathbf{r}})$ and the total normalized loss $ \bar{\rho}_{\tau} (\mathbf{r})$ is obtained as a sum over corresponding quantities at individual dimensions. Hence, the total gradient and Hessian are also obtained as a summation over the individual dimensions. 

{\bf Remark}: Quantile loss is a special case of quantile Huber, since quantile Huber converges (pointwise) to the quantile loss as $\kappa \rightarrow 0$. The loss function, normalization constant, gradient and Hessian of the quantile loss can be obtained in the limit $\kappa \rightarrow 0$ from the corresponding expressions for quantile Huber. As mentioned before, the second derivative of quantile Huber is positive for $\kappa = 0$ and $\tau \in [0,1]$. Hence it is clear that quantile loss is strongly convex and $\tau$ can be inferred in a similar manner, as described in the next Section.

\section{ALGORITHM}
\label{sec:Alg}
We are interested in joint inference on functional parameters 
$x$ and quantile paramaters $\tau$. 
Taking the normalization constant into account, the full 
inference problem is given by 
\begin{equation}
\label{eq:jointInf}
(\overline x, \overline \tau) = \argmin_{\mathbf{x},\tau} \left\{\rho_\tau(r(\mathbf{x})) + \log(c(\tau))\right\}.
\end{equation}
If the residual $r(\mathbf{x})$ is affine in $\mathbf{x}$, we can focus our analysis onto the tuple $(r,\tau)$.
Joint convexity of the objective~\eqref{eq:jointInf} in $(r,\tau)$ would be a sufficient condition 
for joint inference to be well posed. We show however that this is necessarily false. 
\begin{lemma}
\label{lem:notjointconv}
The function $f(r, \tau) = h(\tau) + \tau r$ cannot be jointly convex in $(r,\tau)$ for any $h$. 
\end{lemma}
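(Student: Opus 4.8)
The plan is to exploit the fact that the bilinear term $\tau r$ carries an indefinite Hessian, and no choice of $h$ can be added to it to repair this, since $h$ only contributes to the $\tau\tau$ entry of the Hessian. Concretely, I would first observe that if $h$ is not twice differentiable we can still argue via restriction to lines, so assume for the main argument that $h$ is smooth (the general case reduces to this by a mollification or by a direct secant-inequality argument). Write the Hessian of $f$ at a point $(r,\tau)$ as
\begin{equation}
\nabla^2 f(r,\tau) = \begin{pmatrix} 0 & 1 \\ 1 & h''(\tau) \end{pmatrix}.
\end{equation}
The determinant of this $2\times 2$ matrix is $-1 < 0$ regardless of $h''(\tau)$, so the Hessian is indefinite at every point; hence $f$ is not convex. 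This is the entire idea; the remaining work is just making it rigorous without smoothness assumptions.

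For the non-smooth case, the cleanest route is to restrict $f$ to a well-chosen line and show convexity fails there, which suffices since convexity of a function of several variables is equivalent to convexity along every line. I would pick the direction $(1,-1)$ through a base point, or more symmetrically test convexity on the two-point-plus-midpoint configuration: evaluate $f$ at $(r_1,\tau_1)$, at $(r_2,\tau_2)$, and at the midpoint, choosing the two points so that the bilinear cross-term $\tau r$ contributes a strictly negative second difference that dominates whatever $h$ does. A convenient explicit choice is to fix $\tau_1,\tau_2$ with $\tau_1\neq\tau_2$ and take $r_i$ proportional to $-\tau_i$ scaled by a large constant $t$; then the midpoint inequality for convexity becomes a quadratic in $t$ with a strictly negative leading coefficient coming from $-\tfrac14(\tau_1-\tau_2)^2$ times the scale, while the $h$-terms are independent of $t$. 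Letting $t\to\infty$ violates the inequality, contradicting joint convexity.

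The main obstacle is purely bookkeeping: handling an arbitrary (possibly non-differentiable, possibly non-finite) $h$ uniformly. I expect to sidestep this by noting that if $f$ were jointly convex then $h(\tau) = f(0,\tau)$ would be convex and finite on its domain, and then the line-restriction argument above applies on the interior of that domain; a single clean computation of the second difference along the chosen direction then gives the contradiction. No delicate estimates are needed — the negative determinant (equivalently, the strictly negative coefficient of $t^2$) does all the work.
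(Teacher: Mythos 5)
Your proposal is correct and uses essentially the same argument as the paper: the Hessian of $f$ has a zero $rr$-entry and off-diagonal entries equal to $1$, so its determinant is $-1$ and it is indefinite everywhere, contradicting convexity. Your additional secant/midpoint argument along the line $r=-t\tau$ handles non-smooth $h$ more carefully than the paper's ``without loss of generality'' reduction to twice-differentiable $h$, but it is a refinement of the same idea rather than a different route.
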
 
\begin{proof}
Without loss of generality, suppose we could find a twice differentiable $h(\tau)$ for the sum to be convex. 
Then $f$ is also twice differentiable, with hessian  
\[
\nabla^2 f = \begin{bmatrix}
\nabla^2 h(\tau) & 1 \\ 
1 & 0
\end{bmatrix}.
\]
This matrix is always indefinite, since its determinant is negative. This contradicts the 
necessary condition for convexity of a twice differentiable function. 
\end{proof}
Lemma~\ref{lem:notjointconv} can be immediately used to show that,  regardless of how large a quadratic lower 
bound one can find for $\log c(\tau)$,
the function~\eqref{eq:jointRho} cannot be jointly convex in $(r,\tau)$, and consequently in $(\tau, x)$.
However, $f(r, \tau)$ is {\it biconvex} for any convex $h(\tau)$. 
We therefore get a simple certificate of biconvexity for the joint inference problem. 
\begin{lemma}
\label{lem:biconvexity}
A sufficient condition for biconvexity of 
\begin{equation}
\label{eq:fullObj}
g(\mathbf{x}, \tau) := n\log(c(\tau)) + \sum_{i=1}^n \rho_\tau(y_i-\mathbf{a}_i^T \mathbf{x})  
\end{equation}
in $(\mathbf{x},\tau)$ is $\nabla^2 c(\tau) > \kappa$.
\end{lemma}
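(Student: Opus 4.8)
The plan is to verify the two defining conditions of biconvexity separately: that $\mathbf x\mapsto g(\mathbf x,\tau)$ is convex for every fixed $\tau\in(0,1)$, and that $\tau\mapsto g(\mathbf x,\tau)$ is convex for every fixed $\mathbf x$. The first holds with no hypothesis at all. For fixed $\tau$ the scalar map $r\mapsto\rho_\tau(r)$ is $C^1$ (its one-sided $r$-derivatives agree at the breakpoints $r=-\kappa\tau$ and $r=(1-\tau)\kappa$, as is immediate from the piecewise formula) and piecewise twice differentiable with $\rho_\tau''(r)\in\{0,\,1/\kappa\}\ge 0$, hence convex on $\mathbb R$; each summand $\mathbf x\mapsto\rho_\tau(y_i-\mathbf a_i^{T}\mathbf x)$ is therefore a convex function precomposed with an affine map, the term $n\log c(\tau)$ is constant in $\mathbf x$, and a finite sum of convex functions is convex.

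For the second condition, fix $\mathbf x$, set $r_i=y_i-\mathbf a_i^{T}\mathbf x$, and study $\tau\mapsto g(\mathbf x,\tau)=n\log c(\tau)+\sum_i\rho_\tau(r_i)$ on $(0,1)$. For each fixed $r$, the map $\tau\mapsto\rho_\tau(r)$ is $C^1$ in $\tau$ (the one-sided $\tau$-derivatives, read off the expression for $\partial_\tau\rho_\tau$ in Section~\ref{sec:GradHess}, coincide at the breakpoints $\tau=-r/\kappa$ and $\tau=1-r/\kappa$) and is piecewise twice differentiable, with $\partial_\tau^2\rho_\tau(r)$ equal to $-\kappa\tau$ or $0$ by~\eqref{eqn:Hess1}; in particular $\partial_\tau^2\rho_\tau(r)\ge-\kappa$ on $[0,1]$. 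Hence $\tau\mapsto\rho_\tau(r)+\tfrac{\kappa}{2}\tau^2$ is $C^1$ with nonnegative second derivative off the finitely many breakpoints, so it is convex, and consequently $\tau\mapsto\sum_i\rho_\tau(r_i)+\tfrac{n\kappa}{2}\tau^2$ is convex. Splitting
\[
g(\mathbf x,\tau)=\Bigl(n\log c(\tau)-\tfrac{n\kappa}{2}\tau^2\Bigr)+\Bigl(\sum_{i=1}^n\rho_\tau(r_i)+\tfrac{n\kappa}{2}\tau^2\Bigr),
\]
the second bracket is convex, so convexity of $g(\mathbf x,\cdot)$ follows as soon as the first bracket is convex, i.e. as soon as $(\log c(\tau))''\ge\kappa$. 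By~\eqref{eqn:Hess2} this is precisely the curvature certificate on the normalization term stated in the lemma, and since the bound is uniform in $\mathbf x$, combining the two conditions yields biconvexity of $g$.

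The step requiring the most care — the main obstacle — is the non-smoothness of $\rho_\tau(r)$ as a function of $\tau$: one cannot simply demand $\partial_\tau^2 g\ge0$ pointwise, since $\tau\mapsto\rho_\tau(r)$ is only piecewise twice differentiable. The resolution is the elementary fact that a function which is $C^1$ on an interval and twice differentiable off a finite set, with second derivative bounded below by $m$ there, exceeds $\tfrac{m}{2}\tau^2$ by a convex function; applied with $m=-\kappa$ this absorbs the (concave) $\tau$-dependence of each $\rho_\tau(r_i)$ and reduces the whole question to the single scalar inequality on $(\log c)''$. The only ancillary check is that the one-sided $\tau$-derivatives of $\rho_\tau(r)$ genuinely agree at $\tau=-r/\kappa$ and $\tau=1-r/\kappa$, so that $\rho_\tau(r)$ is truly $C^1$ in $\tau$; this is a one-line computation from the piecewise expression for $\partial_\tau\rho_\tau(r)$.
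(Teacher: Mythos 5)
Your proof is correct and follows essentially the same route as the paper's: the $\tau$-curvature of each $\rho_\tau(r_i)$ is bounded below by $-\kappa$, so the curvature condition on $\log c(\tau)$ exactly compensates and restores convexity in $\tau$, while convexity in $\mathbf{x}$ is automatic for the affine residual. You are more careful than the paper --- checking the $C^1$ matching of $\tau\mapsto\rho_\tau(r)$ at the breakpoints and making the weak-convexity decomposition explicit --- but the underlying argument is the same.
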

\begin{proof}
For fixed $\mathbf{x}$, hence fixed $\mathbf{r}$, $\rho_\tau(\mathbf{r})$ is concave in $\tau$, with Hessian $-n \kappa$. 
The necessary condition on $\log c(\tau)$ follows immediately. 
\end{proof}
The condition in Lemma~\ref{lem:biconvexity} can be easily verified numerically using 
the results of Section~\ref{sec:GradHess}.  Note also that Lemmas \ref{lem:notjointconv} and \ref{lem:biconvexity} will hold true as long as $\rho_\tau$ is convex in $\mathbf{x}$, although we proved them for the affine case. The simple structure of the objective in $\tau$ suggests an elegant 
scheme for joint optimization of~\eqref{eq:fullObj}. Since for each fixed $\mathbf{x}$, 
quantile inference boils down to minimizing a scalar convex objective, 
we can use {\it variable projection}. Specifically, define 
\begin{equation}
\label{eq:projObj}
\widetilde g(\mathbf{x}) = f(\mathbf{x}, \tau(\mathbf{x})),
\end{equation}
with $\tau(\mathbf{x}) = \argmin_\tau g(\mathbf{x},\tau)$. The problem fits into the scheme studied by~\cite{AravkinVanLeeuwen2012}, 
and in particular we have 
\begin{equation}
\label{eq:varProj}
\nabla \widetilde g(\mathbf{x}) = [\partial_{x_i} f(\mathbf{x}, \tau) | _{\tau = \tau(\mathbf{x})}]_{i=1}^p
\end{equation}
In other words, to compute the gradient of the projected function, one needs only compute the partials with respect to 
the elements of $\mathbf{x} \in \mathbb{R}^p$, and plug in the inferred value of $\tau$. 
The resulting algorithm has the flavor of an alternating minimization method, 
but in fact can be analyzed as a descent method for the projected function $\widetilde g(\mathbf{x})$. 

\begin{algorithm}
  \label{alg:joint}
  Minimize $g(\mathbf{x},\tau)$ in~\eqref{eq:fullObj} over $\tau \in [0,1]$.
  \begin{enumerate}
  \item Initialize $\mathbf{x}^0 = 0$, $k = 0$, $\mbox{err} = 1$, $\mathbf{H}_k = \mathbf{I}$.
  \item While $\mbox{err} > \epsilon$
  \begin{enumerate}
  \item Compute $\tau^k = \argmin_\tau g(\mathbf{x}^k, \tau)$
  \item Set $\mathbf{d}^k =  \mathbf{H}_k^{-1}[\partial_{x_i} g(\mathbf{x}, \tau^k)]_{i=1}^p$
  \item Set $\mathbf{x}^{k+1} = \mathbf{x}^k - \alpha_k \mathbf{d}^k$ using line search
  \item Update $\mathbf{H}^k$ so that $\mathbf{H}^k > 0$. 
  \item Set $\mbox{err} = \|\mathbf{d}^k\|$ and Increment $k$
  \end{enumerate} 
  \item Output $(\mathbf{x}^k, \tau^k)$.
           \end{enumerate}
\end{algorithm}
In particular, the matrix  $\mathbf{H}^k$ is the Hessian approximation for the projected function $\widetilde g(\mathbf{x})$,
and $\mathbf{d}^k$ is a descent direction for this function. 
There are many choices for Hessian update strategies. 
For large-scale problems,  the L-BFGS strategy can be very effective. 
Since $\tau$ lies in a compact set and $\rho_\tau(\mathbf{r}) \geq 0$, we have $\widetilde g(\mathbf{x})$ is bounded below, 
and Algorithm~\ref{alg:joint} is guaranteed to find a stationary point for~\eqref{eq:fullObj}.

\subsection{Demonstration}
We provide a demonstration of the proposed approach in a linear regression setting where the data is generated as $y_i = \mathbf{a}_i^T \mathbf{x} + \nu$. A total of $100$ samples were generated by fixing $\mathbf{a}_i$ as a $2-$dimensional vector realized from a Gaussian random distribution, and $\mathbf{x} = [2 \text{ } 5]$. The noise $\nu$ is generated from a Laplacian distribution. The sign of the elements of the noise are varied from $100 \%$ positive to $100 \%$ negative. The coefficient vector $\mathbf{a}$ is recovered using least squares. We also perform the joint recovery of coefficients and quantile parameter using the proposed approach. Average recovery performance is evaluated using MSE with respect the true coefficients for $100$ repetitions of the experiment and tabulated in Table \ref{tab:rec_perf}. Clearly, the proposed approach is superior to least squares in coefficient recovery at all noise conditions. Further, the recovered quantile parameter is also reflective of the noise added: $\tau$ changes from high to low as the noise changes from fully positive to fully negative.

\begin{table}[htbp]
  \centering
  \caption{Average performance (MSE) with least squares and the proposed joint inference approach for recovering the coefficient and the quantile parameter.}
    \begin{tabular}{cccc}
    \toprule
    \textbf{\% Positive} & \textbf{Least sq.} & \textbf{Proposed} & \textbf{$\tau$} \\
    \midrule
    100   & 0.255 & 0.036 & 0.788 \\
    80    & 0.270 & 0.143 & 0.719 \\
    60    & 0.231 & 0.135 & 0.587 \\
    40    & 0.260 & 0.157 & 0.422 \\
    20    & 0.248 & 0.103 & 0.289 \\
    0     & 0.263 & 0.039 & 0.212 \\
    \bottomrule
    \end{tabular}%
  \label{tab:rec_perf}%
\end{table}%

For the same data, we also perform coefficient recovery at individual quantiles using plain quantile Huber regression without the proposed inference scheme for $\tau$ and plot the mean normalized quantile penalty. This is shown in Figure \ref{fig:lossfn_quants}. Clearly the $\tau$ at which the loss is minimized match well with the $\tau$ inferred by our proposed algorithm in Table \ref{tab:rec_perf}.

\begin{figure}
\centering
\includegraphics[width=7cm]{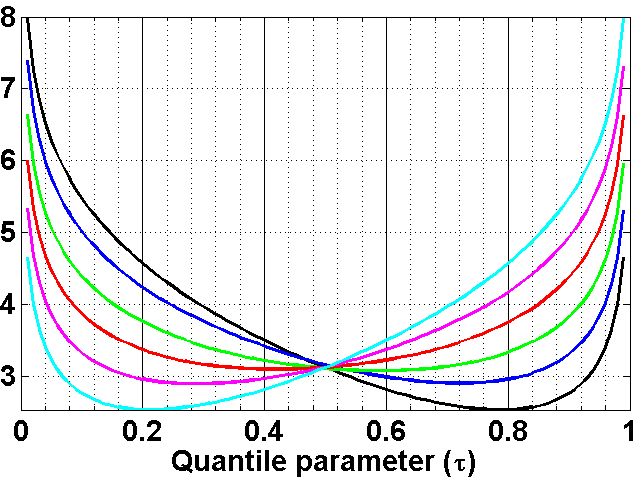}      
\caption{Mean normalized quantile penalty at various quantiles for different percentages of positive values in the residual - $0 \%$ (cyan),  $20 \%$ (magenta),  $40 \%$ (red),  $60 \%$ (green), $80 \%$ (blue), and $100 \%$ (black) for $\kappa = 1$. This is performed using plain quantile Huber regression without the proposed inference scheme. The minimum values are loss are respectively obtained at $\tau = \{0.79, 0.72, 0.59, 0.42, 0.29, 0.21\}$.}
\label{fig:lossfn_quants}
\end{figure}

\section{APPLICATION TO GRADIENT BOOSTED MACHINES}
\label{sec:GBM}

Gradient boosted machines (GBM) are a machine learning paradigm where the key idea is to assume that the unknown function $f$ is a linear combination of several \textit{base learners} \cite{friedman2001greedy}. The base learners will be greedily trained by setting their target response to be the negative gradient of the current loss with respect to the current prediction. The base learner can be imagined to be the ``basis function'' for the negative gradient. Concretely, if we assume the function of interest 
\begin{equation}
f(\mathbf{x}) = \sum_{j = 1}^m  \beta_j \psi_j(\mathbf{x} | \mathbf{z}_j)
\end{equation} where $\psi_j(\mathbf{x} | \mathbf{z})$ is the base learner parameterized by $\mathbf{z}$. The GBM proceeds by performing a stagewise greedy fit,
\begin{equation}
\label{eqn:greedy_update}
(\beta_j, \mathbf{z}_j) = \argmin_{\beta,\mathbf{z}} \sum_{i=1}^n 
L(y_i, f_{j-1} (\mathbf{x}_i)+ \beta \psi_j(\mathbf{x}_i | \mathbf{z})),
\end{equation} where $L$ is the loss function and $f_{j-1}$ is the estimate of the function obtained at the previous iteration,
\begin{equation}
f_{j-1}(\mathbf{x}) = \sum_{t = 1}^{j-1}  \beta_t \psi_t(\mathbf{x} | \mathbf{z}_t).
\end{equation} The estimate of base learner parameters at iteration $j$ is obtained by setting its response $\widetilde{\mathbf{y}}$ to be the negative gradient
\begin{equation}
\label{eqn:neg_grad}
\widetilde{y}_i = -\left[ \frac{\partial L(y_i,f(\mathbf{x}_i))} {\partial f(\mathbf{x}_i)} \right]_{f(\mathbf{x}) = f_{j-1}(\mathbf{x})}
\end{equation} for all $i = \{1, \ldots, n\}$. The parameter $\mathbf{z}_j$ is updated using
\begin{equation}
\label{eqn:zupdate}
(\gamma, \mathbf{z}_j) = \argmin_{\gamma,\mathbf{z}} \sum_{i=1}^n 
(\widetilde{y}_i -  \gamma \psi_i(\mathbf{x}_i | \mathbf{z}))^2.
\end{equation} The base learner coefficient $\beta_j$ is updated using (\ref{eqn:greedy_update}).

\begin{figure}[!h]
\centering
\begin{minipage}[b]{\linewidth}
 \centering
\includegraphics[width=7.5cm]{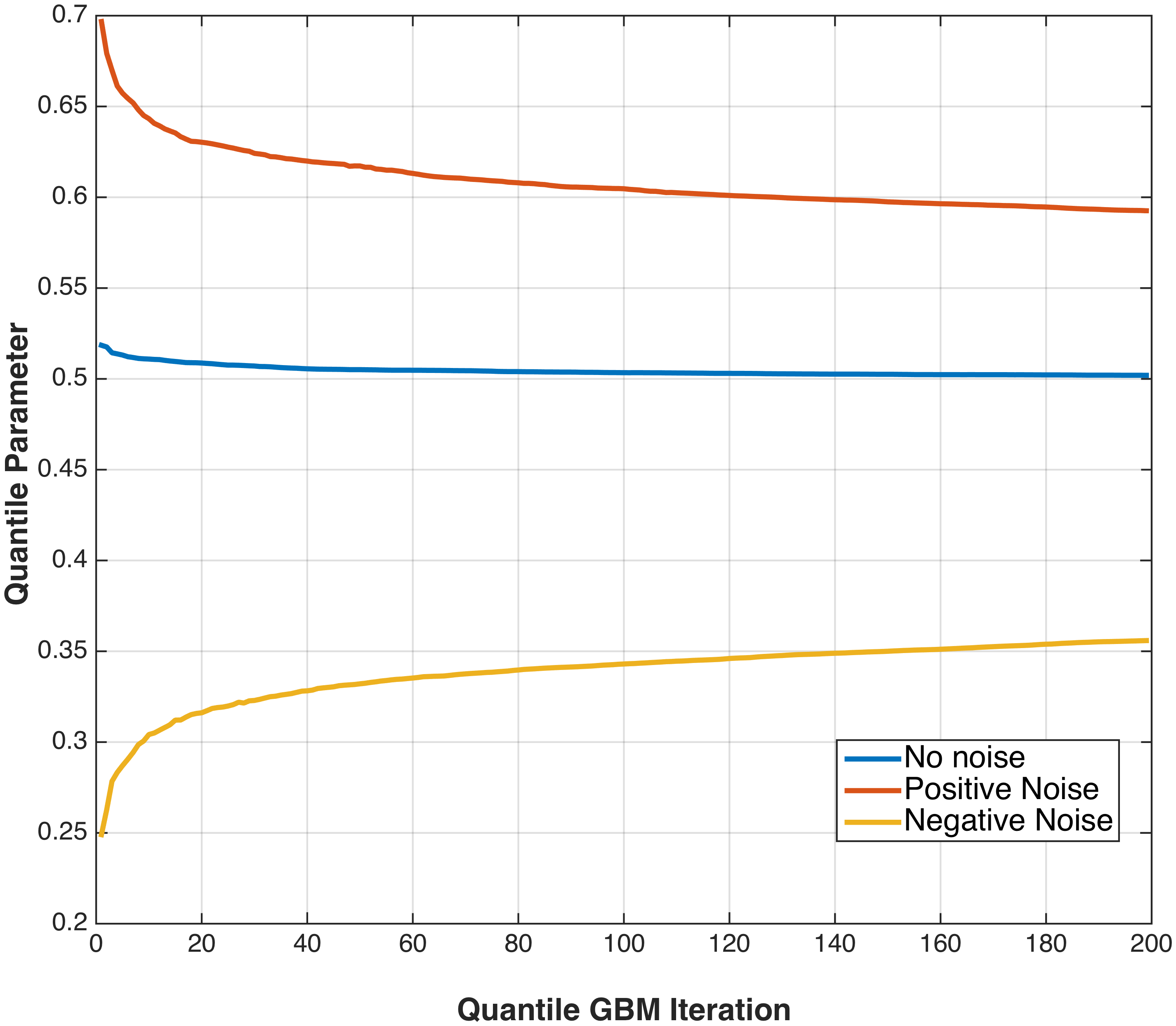}    
\end{minipage}
\vfill
\begin{minipage}[b]{\linewidth}
 \centering
\includegraphics[width=7.5cm]{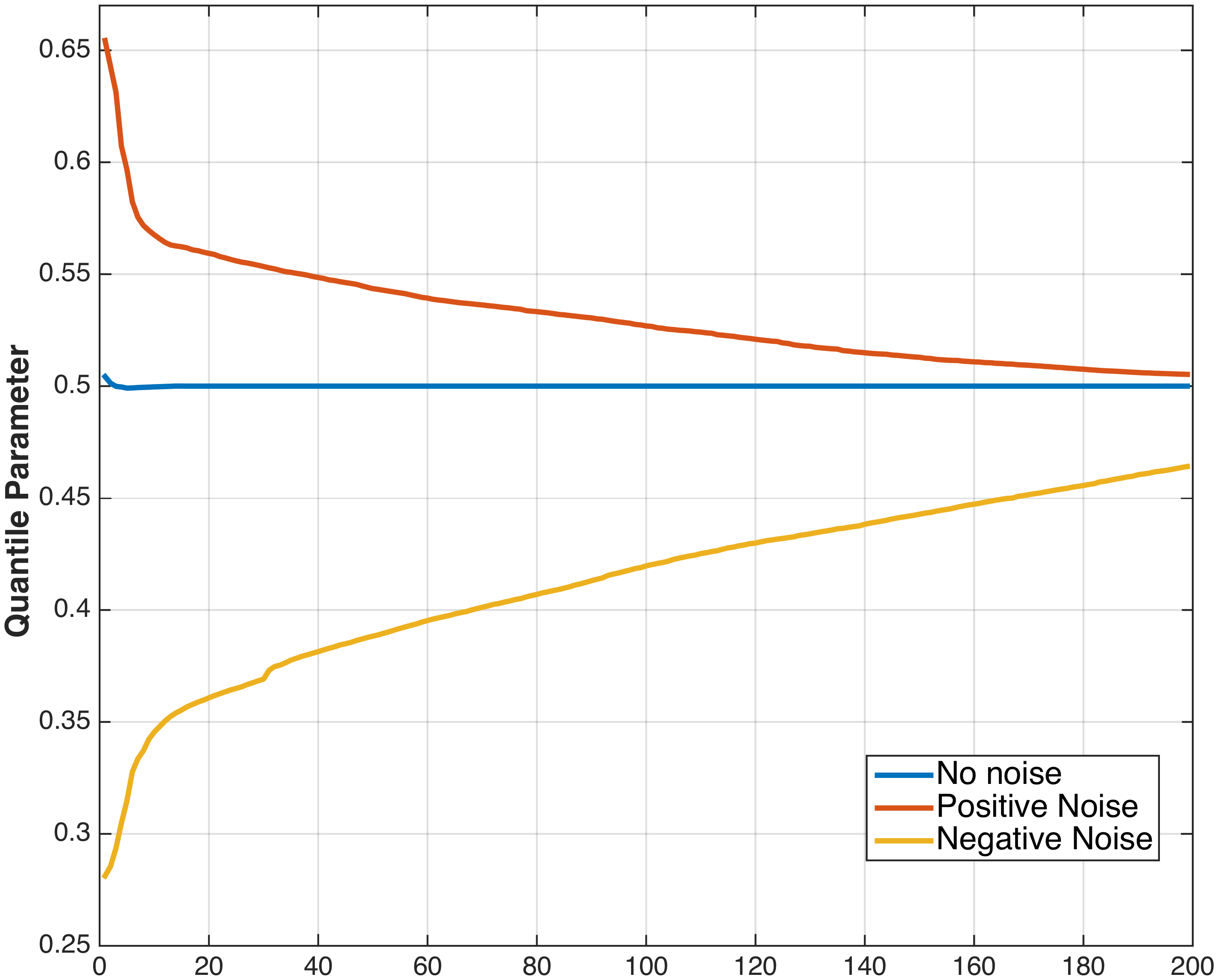}    
\end{minipage}
\vfill
\begin{minipage}[b]{\linewidth}
 \centering
\includegraphics[width=7.5cm]{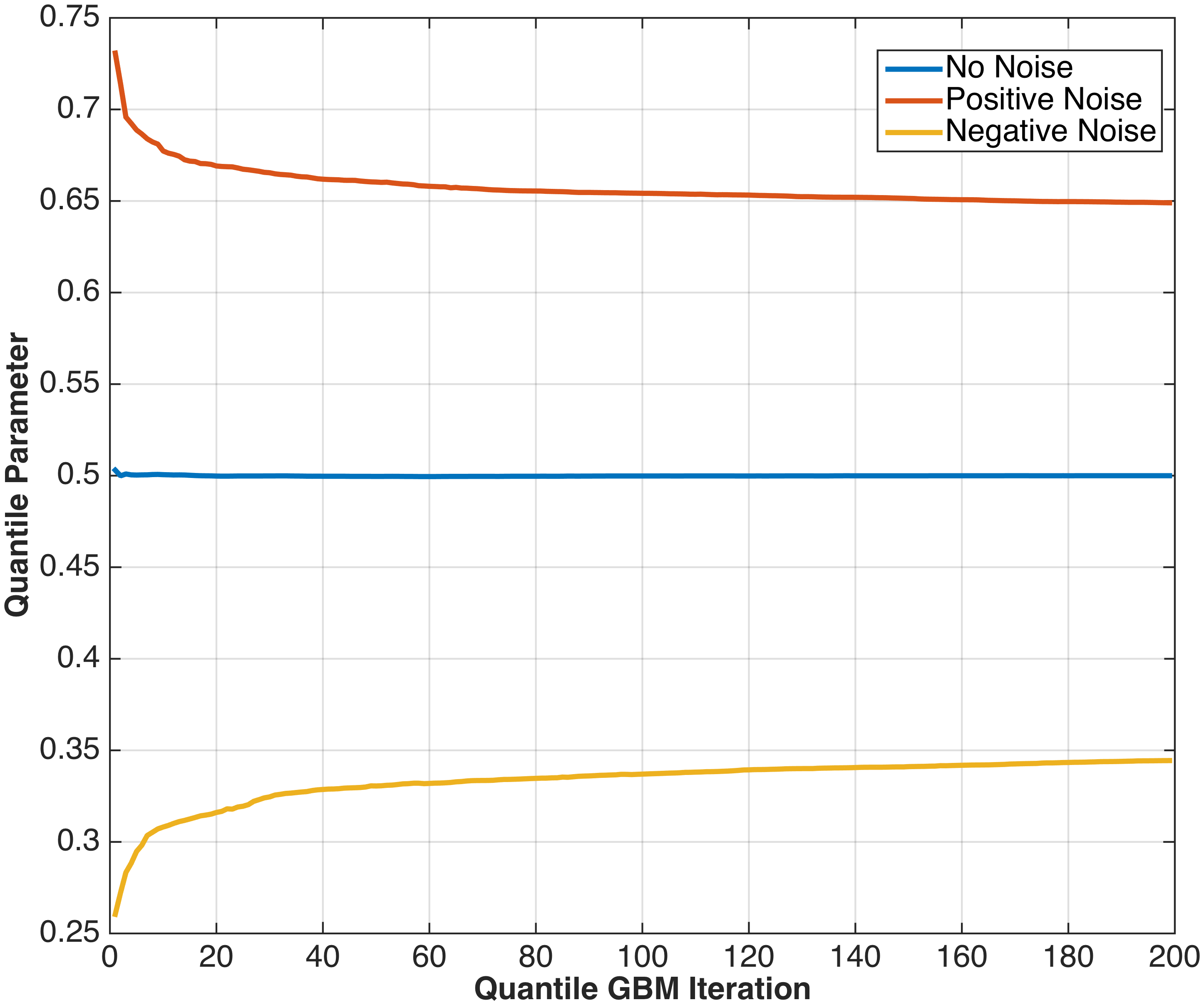}  
\end{minipage} 
\vfill 
\caption{Quantile parameter $\tau$ estimated in each step of the proposed GBM algorithm: (top) Forest fires dataset, (middle) Concrete slump dataset and, (bottom) Compressive strength dataset.}
\label{fig:gbm}
\end{figure}

\begin{table}[t]
  \centering
  \caption{Performance of the proposed quantile GBM algorithm for $4$ UCI datasets at different noise levels and patterns. Regardless of the underlying noise pattern, our approach can identify the appropriate quantiles for recovering the data. }
    \begin{tabular}{|c|c|c|c|}
    \hline
    \textbf{Dataset} & $\boldsymbol{\sigma} $ & \textbf{\%pos./ \%neg.} & \textbf{MSE} \\
    \hline
    \multirow{7}{*}{Forest Fires} & 0     & -   & 0.364 \\
    \cline{2-4}
          & \multirow{3}{*}{2} & 75/25 & 1.17 \\
          &       & 50/50 & 1.12 \\
          &       & 25/75 & 1.15 \\
          \cline{2-4}
          & \multirow{3}{*}{4} & 75/25 & 2.25 \\
          &       & 50/50 & 2.21 \\
          &       & 25/75 & 2.31 \\
     
    \hline
    \multirow{7}{*}{Housing} & 0     & -   & 0.74 \\
    	\cline{2-4}
          & \multirow{3}{*}{2} & 75/25 & 1.39 \\
          &       & 50/50 & 1.44 \\
          &       & 25/75 & 1.32 \\
          \cline{2-4}
          & \multirow{3}{*}{4} & 75/25 & 2.08 \\
          &       & 50/50 & 2.19 \\
          &       & 25/75 & 2.15 \\
          \hline
    \multirow{7}{*}{Concrete Slump} & 0     & -   & 0.19 \\
    \cline{2-4}
          & \multirow{3}{*}{2} & 75/25 & 0.95 \\
          &       & 50/50 & 0.89 \\
          &       & 25/75 & 0.92 \\
          \cline{2-4}
          & \multirow{3}{*}{4} & 75/25 & 1.83 \\
          &       & 50/50 & 1.91 \\
          &       & 25/75 & 1.86 \\
          \hline
    \multirow{7}{*}{Comp. strength} & 0     & -  & 0.48 \\
    \cline{2-4}
          & \multirow{3}{*}{2} & 75/25 & 1.57 \\
          &       & 50/50 & 1.51 \\
          &       & 25/75 & 1.49 \\
          \cline{2-4}
          & \multirow{3}{*}{4} & 75/25 & 2.53 \\
          &       & 50/50 & 2.41 \\
          &       & 25/75 & 2.47 \\
    \hline
    \end{tabular}%
  \label{tab:perf}%
\end{table}%

\subsection{GBM with Quantile Parameter Inference}
GBMs with quantile losses \cite{zheng2012qboost} have been shown to perform better in inferring the quantiles compared to conventional quantile regression. They have a computational advantage, since they proceed by fitting functions greedily to negative gradients. Their hypothesis class is also much larger since each stage of the GBM can potentially use a different base learner. However, since they cannot automatically infer quantile parameters, we have to perform estimation over many values of $\tau$ in order to understand their performance profile and choose the ``best'' quantile parameter according to our requirements.

We propose to extend GBMs with quantile Huber losses to perform automatic inference of the base-learner parameters and the quantile parameters. The greedy stagewise regression (\ref{eqn:greedy_update}) modifies to
\begin{equation}
\label{eqn:greedy_update_quantile}
(\tau_j,\beta_j, \mathbf{z}_j) = \argmin_{\tau, \beta,\mathbf{z}} \sum_{i=1}^n 
L (y_i, f_{j-1} (\mathbf{x}_i)+ \beta \psi_j(\mathbf{x}_i | \mathbf{z})).
\end{equation} In practice this update is first performed by computing the negative gradient in (\ref{eqn:neg_grad}), where the loss $L = \rho_{\tau_{j-1}}$. Here $\tau_{j-1}$ is the quantile parameter estimated in the previous iteration. The parameter $\mathbf{z}_j$ is computed using (\ref{eqn:zupdate}). The base learner coefficient $\beta_j$ and the quantile parameter $\tau_j$ are obtained using the minimization (\ref{eqn:greedy_update_quantile}) with $\mathbf{z}_j$ fixed. This is performed using our proposed algorithm \ref{alg:joint}. Note that the $\mathbf{x}$ in the algorithm corresponds to the one-dimensional parameter $\beta_j$ and $\tau$ corresponds to the quantile parameter $\tau_j$.

\subsection{Experimental Results}
We study the behavior of the proposed quantile GBM algorithm using from $4$ different datasets obtained from the UCI repository: (a) Housing ($506$ samples with $12$ input variables), (b) Compressive strength ($1030$ samples in $9$ input variables), (c) Concrete slump ($103$ samples with $7$ input variables) abd (d) Forest fire ($517$ samples with $10$ input variables). In each case, we used $75\%$ samples for training and the rest for validation, and all reported results were averaged over $10$ independent runs. We believe that the proposed quantile parameter estimation technique can infer the underlying noise model and thereby build robust regression models without user interference. In particular, the we study the regression performance and the behavior of the qunatile estimation process at different levels of positive and negative noise corrupting the data. We rescale all datasets to the range $(-1,1)$ and in each case, we add random Laplacian noise at $3$ different noise levels, $\sigma = \{0, 2, 4\}$ respectively. We fixed the parameter $\kappa = 0.05$, the number of estimators $m = 200$. Since the GBM framework is flexible to used with any base learner, we employed regression trees in our setup.

We evaluated the average Mean Squared Error (MSE) for each of the cases using our proposed quantile GBM algorithm, and they are  reported in Table \ref{tab:perf}. It can be observed from the results that the proposed quantile GBM algorithm can automatically infer the appropriate parameter $\tau$ in each iteration of the algorithm and hence can result in an effective ensemble model. This is verified by corrupting the response variable for a subset of samples with positive noise and the rest with negative noise, and evaluating the regression performance using a validation dataset. As the results show, regardless of the noise pattern considered, for a given $\sigma$, the quantile GBM technique produces very similar MSE. However, it must be noted that the optimal quantile parameter for each of the noise pattern settings is widely different. Figure \ref{fig:gbm} illustrates the quantile parameters chosen in each iteration of the GBM algorithm with different datasets. In particular, we show the results for the cases with completely positive, and completely negative noise components at $\sigma = 4$ and the noise-free case. There are two interesting observations: (a) higher $\tau$ values are chosen when data is corrupted by positive noise and lower $\tau$ values when it is corrupted by negative noise, (b) As the GBM algorithm proceeds towards convergence, it moves closer to the median quantile, indicating that the initial set of estimators in the ensemble have already recovered the underlying data, and the subsequent models added being modeling the noise components.




\bibliography{quantileInference.bib}

\end{document}